%% file: main.tex
\documentclass{article}
\usepackage[final,sglblindworkshop]{neurips_2026}
\workshoptitle{PALM: Personalized, Aligned, Long-Term Memory for AI Systems}
\makeatletter
\renewcommand{\@noticestring}{Accepted @ Personalized, aligned, long-term memory for AI systems Workshop, NeurIPS 2026}
\makeatother

\usepackage[utf8]{inputenc}
\usepackage[T1]{fontenc}
\usepackage[hidelinks]{hyperref}
\usepackage{url}
\usepackage{booktabs}
\usepackage{array}
\usepackage{amsmath,amssymb,amsthm}
\usepackage{microtype}
\usepackage{xcolor}
\usepackage{graphicx}
\usepackage{tikz}
\usetikzlibrary{positioning,fit,arrows.meta}
\usepackage{enumitem}
\usepackage{multirow}
\usepackage{flafter}
\usepackage{placeins}
\usepackage{needspace}

\input{generated/result_macros.tex}

\newtheorem{proposition}{Proposition}

\newcommand{\core}{\mathcal{C}}
\newcommand{\beh}{F}
\newcommand{\jc}{JointCore}

\title{Beyond Marginal Influence: Interaction-Aware Memory Selection for LLM Agents}
\author{%
  BHVSP Subrahmanyam \\
  MU-VT Interdisciplinary Advanced Research Centre for Transformative Technologies \\
  Mahindra University, India \\
  \texttt{subrahmanyam.bh@mahindrauniversity.edu.in}
  \And
  Neeraj Choudhary \\
  Department of Computer Science and Engineering \\
  Mahindra University, India \\
  \texttt{neeraj.choudhary@mahindrauniversity.edu.in}
}

\begin{document}
\maketitle

\begin{abstract}
LLM-agent memory selection often uses retrieval rankings based on record
relevance or individual influence.  Such rankings cannot identify sufficient
subsets under redundant or jointly necessary evidence.  We study per-decision memory
sufficiency: selecting a small subset that preserves full-memory behavior.  To
separate search error from model error, we introduce JointCoreBench, a 
benchmark with nine interaction families, six workflow domains, natural-language
decisions, and exact sufficient-set oracles, and \jc, a relevance/cost-ordered
group-deletion baseline that tests blocks before individual records.  On
\ExactTasks{} exact conditions, cost-ordered \jc{} recovers a minimum-cost
sufficient set in \ExactJCValid{} of cases, reduces subset evaluations from
\ExactSingletonInterventions{} to \ExactJCInterventions{}
(\ExactInterventionReduction{}), and achieves \ExactJCSavings{} active-memory
savings.  Across four 12B--70B model families and \HeadlineTasksPerModel{} tasks
per model, its unweighted mean is \HeadlineJCInterventions{} versus
\HeadlineSingletonInterventions{} model queries and \HeadlineJCMinimum{} versus
\HeadlineSingletonMinimum{} minimum-cost recovery relative to cost-aware
singleton deletion.  Both methods preserve full-memory behavior under the
deterministic primary interface.  Tests with physical deletion, UNKNOWN masks,
PENDING sentinels, counterfactual values, and alternative labels identify
interface-dependent behavior.
JointCore is an efficient benchmark baseline; the study does not establish
deployment-ready memory compression.  Code is available at
\url{https://github.com/dr-pandit-69/jointcore-neurips}.
\end{abstract}

\section{Introduction}

Long-running LLM agents collect observations, tool outputs, constraints, and
decisions.  Their memory cannot grow without limit.  LongMemEval
and MemoryAgentBench test retrieval, cross-session updates, and selective
forgetting \citep{wu2024longmemeval,hu2026memoryagentbench}.  The benchmarks
show agents remain unreliable on these tasks.  Existing methods often retrieve,
rank, or evaluate records one at a time.

Decisions can depend on several records.  Suppose two independent authorization
channels can each permit an action.  Removing one leaves the decision unchanged
because the other remains.  Other rules may require two approvals, so neither
record is useful alone.  Single-record checks can miss
the first pattern because neither channel appears necessary when removed alone,
and they can miss the second when records are scored without their partners.
We instead seek a subset that preserves the full-memory
decision.  We call this \emph{behavioral sufficiency}.  When the task rule is
known, \emph{logical sufficiency} asks whether the subset contains required
evidence.  The \emph{minimum-cost} sufficient set has the lowest registered
cost.

Sufficient input subsets and removal-based explanations ask whether an output
survives input removal \citep{covert2021removing,carter2019sis}.  In agent
memory, Causal Memory Intervention measures individual-record effects, DeMem
compresses memory around a decision, and MemAudit combines counterfactual
influence with structural signals
\citep{srivastava2026cmi,zou2026demem,tan2026memaudit}.  These studies address
related questions, but existing evaluations do not supply controlled
interactions with exact sufficient and minimum-cost sets.  Without that ground
truth, we cannot separate search errors from errors made by the model judging a
subset.

Exhaustive search can give an exact answer, but the number of subsets grows
exponentially.  Each model-mediated check also requires a model query.  We
introduce JointCore to reduce this cost.  It tests a block of records at once.
If the block can be removed without changing the decision, JointCore deletes it
and skips its smaller groups.  If the decision changes, JointCore splits the
block and continues.  It orders records by relevance and registered cost.  The
search adapts delta debugging and its use in LLM-integrated systems
\citep{zeller2002delta,zhu2026ddllm} to typed memory and decision preservation.

Real agent histories do not reveal which subset is sufficient.  Two different
subsets can produce the same observed decision, so the decision alone is not
enough to evaluate a search method.  JointCoreBench provides a reference answer
for each controlled case.  It combines natural-language workflows with
redundancy, conjunction, and other interactions.  Its evaluator enumerates
every sufficient subset and identifies the minimum-cost solutions.  These
answers come from task logic, not model behavior.  Each selection method is
compared with this common target.  We then measure behavioral and logical
sufficiency, active-memory cost, and selection cost, and separate search errors
from incorrect model judgments.

We first use the exact evaluator, then repeat the selection process with Qwen,
Gemma, Mistral and Llama as the decision models.  Llama is the
largest-capacity stress test in this cohort.  The task format and search
procedure remain fixed.  We also vary the representation of omitted records
and decision labels.  These tests examine whether the conclusions depend on
the memory or label interface.

The paper makes three contributions:
\begin{enumerate}[leftmargin=*,itemsep=2pt,parsep=0pt,topsep=3pt,partopsep=0pt]
  \item JointCoreBench, an exact benchmark for behavioral and logical memory
  sufficiency, controlled interactions, minimum cost, active-memory cost, and
  selection cost;
  \item JointCore, a relevance and cost-ordered group-deletion baseline for
  finding jointly sufficient memory with fewer selection queries; and
  \item a two-stage evaluation with exact oracles and several model families,
  together with tests of memory representations and decision labels.
\end{enumerate}

\section{Problem formulation}

Fix an environment state.  Let $V=\{1,\ldots,n\}$ index its typed memory
records, with registered costs $c_i$.  For a retained subset $S\subseteq V$,
$\beh(S)=(a_1,\ldots,a_T)$ is the resulting decision trace; $\beh(V)$ is the
full-memory reference.  We call $S$ \emph{behaviorally sufficient} when
$\beh(S)=\beh(V)$.  Its cost is $c(S)=\sum_{i\in S}c_i$, and it is
\emph{minimum-cost} if no cheaper sufficient subset exists.  In the exact
benchmark, $G(S)$ denotes logical sufficiency and $\core^\star$ contains all
minimum-cost logical solutions.

An inclusion-minimal set has no sufficient proper subset.  A 1-minimal set
becomes insufficient when any one record is removed.  Neither implies minimum
cost.  We report behavioral retention, logical sufficiency, minimum-cost
recovery, retained payload cost, and selection cost separately.  A
\emph{subset evaluation} computes $\beh(S)$: the exact stage uses the oracle,
while the model stage requires one model query.  Full-memory accuracy remains
separate because behavioral retention can preserve an incorrect decision.

Define the full-set singleton effect of record $i$ as
\begin{equation}
 \Delta_i=\mathbf 1[\beh(V\setminus\{i\})\neq\beh(V)].
\end{equation}
Under redundancy, keeping only $\{i:\Delta_i=1\}$ can discard every sufficient
record.

\begin{proposition}[Singleton effects under redundancy]
For every $k\geq2$, there exists a task with $k$ individually sufficient
records for which $\Delta_i=0$ for every record, although the empty set is
insufficient.
\end{proposition}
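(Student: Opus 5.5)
The plan is an explicit construction: an OR (disjunctive) decision rule over $k$ redundant records, followed by a direct check of each claim. Fix $k\ge 2$ and let $V=\{1,\ldots,k\}$. Each record $i$ is an authorization record of the kind described in the introduction, with arbitrary positive costs $c_i$. Use a single-step trace ($T=1$) with the decision rule
\[
\beh(S)=\begin{cases}\textsc{permit}, & S\neq\emptyset,\\ \textsc{deny}, & S=\emptyset.\end{cases}
\]
In words, the action is permitted exactly when at least one authorization channel is retained. The logical sufficiency predicate is $G(S)=\mathbf 1[S\neq\emptyset]$, and it agrees with $\beh$ on every subset.

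The verification then takes three short steps.

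\textbf{Step 1 (full-memory reference).} Since $V\neq\emptyset$, we have $\beh(V)=\textsc{permit}$.

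\textbf{Step 2 (each record is individually sufficient).} For each $i$, the set $\{i\}$ is nonempty, so $\beh(\{i\})=\textsc{permit}=\beh(V)$. Hence $\{i\}$ is behaviorally (and logically) sufficient, which gives $k$ individually sufficient records.

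\textbf{Step 3 (all singleton effects vanish).} Because $k\ge2$, each $V\setminus\{i\}$ still contains some $j\neq i$. Therefore $\beh(V\setminus\{i\})=\textsc{permit}=\beh(V)$, and so $\Delta_i=0$ for every $i$.

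\textbf{Step 4 (the empty set is insufficient).} $\beh(\emptyset)=\textsc{deny}\neq\beh(V)$. This also gives the remark preceding the proposition: the rule ``keep $\{i:\Delta_i=1\}$'' returns $\emptyset$, which discards every sufficient record.

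The only point needing care is that the hypothesis $k\ge2$ is used exactly in Step 3; for $k=1$, removing the single record empties memory and gives $\Delta_1=1$. I would also note that the construction is realizable within the benchmark's redundancy family, so the task is a legitimate instance of the formulation and not an artificial function. There is no real obstacle, since everything follows from the definitions of $\beh$, $\Delta_i$, and sufficiency.
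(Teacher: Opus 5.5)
Your proof is correct and matches the paper's argument. The paper also takes $V=R$ with $|R|=k$ and the OR rule $\beh(S)=\mathbf 1[S\cap R\neq\emptyset]$, which is your permit-iff-nonempty rule, and it checks singleton sufficiency, $\Delta_i=0$ using $k\geq2$, and $\beh(\emptyset)=0$ in the same way you do.
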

\begin{proof}
Let $V=R$, $|R|=k$, and
$\beh(S)=\mathbf 1[S\cap R\neq\emptyset]$.  Each singleton is sufficient.  For
$k\geq2$, removing any $i$ from $V$ leaves another record, so
$\beh(V\setminus\{i\})=\beh(V)=1$ and $\Delta_i=0$.  Since
$\beh(\emptyset)=0$, the empty set is insufficient.
\end{proof}

\section{Interaction-aware memory selection}

\paragraph{JointCore search.}
JointCore initializes $S=V$ and enqueues the full ordered record sequence as
the root group of a FIFO queue.  At each step, it dequeues a group $G$ and
compares $\beh(S\setminus G)$ with the full-memory reference $\beh(V)$.  A
match removes $G$ from $S$; a mismatch bisects and enqueues $G$ when
$|G|>1$, or retains $G$ when it is a singleton.  The procedure stops when the
queue is empty or the selection budget is exhausted.  Every evaluation uses
the same environment snapshot, and an invalid model response counts as a
mismatch.

For a deterministic evaluator, $\beh(S)=\beh(V)$ is an invariant: it holds at
initialization, and $S$ changes only after a matching evaluation.  Thus, the
returned set is behaviorally sufficient under the primary memory
representation, including when the budget is exhausted.  This guarantee does
not imply logical sufficiency, minimum cost, or robustness to another memory
representation.

\paragraph{Selection-evaluation complexity.}
One selection evaluation computes $\beh(S\setminus G)$: it is a subset
evaluation in the exact stage and one model query in the model stage.  Excluding
the shared full-memory reference evaluation, bisection defines a binary
partition tree with $n$ leaves and at most $2n-1$ nodes.  Since each node is
evaluated at most once, the registered budget $Q$ gives
\begin{equation}
  N_{\mathrm{JC}}(n,Q) \leq \min\{Q,\,2n-1\}.
\end{equation}
An accepted group deletion prunes its descendants.  The worst case visits all
$2n-1$ nodes because every non-singleton deletion changes the decision, nearly
doubling the $n$ evaluations used by conditional singleton deletion.  The
registered caps, $Q=256$ for exact evaluation and $Q=64$ for model evaluation,
exceed the corresponding tree bounds of 127 at $n=64$ and 31 at $n=16$.

\paragraph{Search order.}
The fixed development rule sets $B=4$.  It places records outside the semantic
top-$B$ in a contiguous low-relevance prefix, followed by the top-$B$ records
in decreasing payload cost.  The prefix exposes low-relevance records to block
deletion, while decreasing payload cost fixes the refinement order within the
suffix.  The ordering is deterministic and has no learned component.
Figure~\ref{fig:jointcore} traces one complete search.

\begin{figure}[!htbp]
\centering
\begin{tikzpicture}[
  font=\scriptsize,
  >=Latex,
  flow/.style={->,semithick,draw=black!70},
  box/.style={draw=black!55,rounded corners=2pt,align=center,
              minimum height=1.0cm,inner sep=4pt},
  memory/.style={box,fill=blue!7,draw=blue!45!black},
  query/.style={box,fill=orange!9,draw=orange!55!black},
  remove/.style={box,fill=green!9,draw=green!45!black},
  split/.style={box,fill=red!7,draw=red!45!black},
  neutral/.style={box,fill=black!3,draw=black!45}
]
\node[memory,text width=3.55cm] (memory) at (1.95,2.55)
  {\textbf{Ordered memory}\\
   $[d_1,d_2,d_3,d_4\mid e_1,e_2,e_3,e_4]$};
\node[split,text width=3.35cm] (root) at (6.75,2.55)
  {\textbf{Query without all eight}\\
   decision changes; split block};
\node[remove,text width=3.55cm] (drop) at (3.25,.55)
  {\textbf{Query without $d_1{:}d_4$}\\
   decision unchanged; remove block};
\node[split,text width=3.55cm] (refine) at (9.25,.55)
  {\textbf{Query without $e_1{:}e_4$}\\
   decision changes; split block};
\node[neutral,text width=3.55cm] (continue) at (6.05,-1.45)
  {\textbf{Continue on smaller blocks}\\
   remove $e_3,e_4$; retain $e_1,e_2$};
\node[memory,text width=2.45cm] (result) at (11.55,-1.45)
  {\textbf{Returned core}\\$\{e_1,e_2\}$};
\draw[flow] (memory.east) -- node[above,fill=white,inner sep=1pt]
  {$S\leftarrow V$} (root.west);
\draw[flow] (root.south) -- ++(0,-.35) -| node[pos=.72,above,fill=white,inner sep=1pt]
  {first half} (drop.north);
\draw[flow] (root.south) -- ++(0,-.35) -| node[pos=.72,above,fill=white,inner sep=1pt]
  {second half} (refine.north);
\draw[flow] (drop.south) -- ++(0,-.32) -| (continue.north);
\draw[flow] (refine.south) -- ++(0,-.32) -| (continue.north);
\draw[flow] (continue.east) -- (result.west);
\end{tikzpicture}
\caption{Schematic JointCore trace on eight ordered records.  The search
removes $d_1{:}d_4$ as a block, refines $e_1{:}e_4$, and returns
$\{e_1,e_2\}$.}
\label{fig:jointcore}
\end{figure}

\paragraph{Comparators.}
We compare full memory; random-$B$ and semantic top-$B$; independent full-set
leave-one-out; cost-aware conditional singleton backward selection; legacy and
cost-ordered JointCore; and an exact minimum-cost oracle.  Static methods use no
selection evaluations; the oracle is an upper bound, not an implementable model
selector.  Independent leave-one-out tests every deletion against $V$, whereas
conditional singleton updates the retained set after each accepted deletion.
Cost-aware singleton is the primary efficiency comparator: each query tests one
record, whereas JointCore can test an entire block.
Both conditional methods use the same full-memory decision as their reference.

\section{Benchmark and evaluation protocol}

\subsection{Benchmark construction}

JointCoreBench instantiates nine logical families across six workflow domains:
orders, access control, deployment, calendar, database, and file debugging.
The families are singleton, redundant OR, synergistic AND, XOR, threshold,
multiple minimum sets, proxy-confounded, delayed-state, and stale-policy
conjunction.  Stale-policy is a controlled conflict pattern, not a safety
evaluation.  Support sizes are 2 and 4; record values, order, costs, semantic
scores, target-action orientation, and seed blocks are deterministic.
Appendix~\ref{app:benchmark-details} states each logical rule and follows one
task from its full memory to the selected core.

We evaluate the same task grammar in two stages.  The exact stage crosses the
nine families and six domains with candidate counts
$n\in\{8,16,32,64\}$, two support sizes, four horizon labels, and four seed
blocks, for \ExactTasks{} conditions.  Here, the logical oracle evaluates each
candidate subset without a model judgment.  The model stage fixes $n=16$ and
uses all nine families, six domains, two support sizes, and four seed/horizon
blocks, for \HeadlineTasksPerModel{} tasks per checkpoint.  ``Horizon'' is a
workflow-condition label; each trace contains one condition decision and one
terminal action.

\subsection{Primary evaluation}

The primary model interface replaces each omitted record value with a typed
UNKNOWN token, so every query preserves the original schema.  Each selected
subset is compared with the same model's full-memory decision.  We report the
outcomes defined in Section~2, together with full-memory control accuracy.  In
the exact stage, selection cost is measured in subset evaluations; in the model
stage, each such check is one model query.  Active-memory cost sums the
registered synthetic payload weights.  It excludes the fixed prompt scaffold
and is not a token-count or latency measurement.

\subsection{Model cohort and inference}

The displayed cohort comprises Qwen2.5-14B \citep{yang2025qwen25}, Gemma-4-12B-it
\citep{google2026gemma4}, Mistral-Small-3.1-24B-Instruct
\citep{mistral2025small31}, and Llama-3.3-70B-Instruct
\citep{meta2024llama33}.  Llama is an exploratory largest-capacity stress test
in this cohort.  Because the 12B display cutoff was not preregistered, the
cohort is descriptive and supports no scale claim.

Qwen and Gemma use BF16 weights.  Mistral uses a registered NF4 derivative;
Llama uses the official pinned snapshot with runtime NF4.  Both NF4 paths use
double quantization and BF16 compute \citep{dettmers2023qlora}, but are not
treated as BF16-weight evidence.  All models run locally on 40GB A100 GPUs with
Transformers 5.15, PyTorch 2.6/CUDA 12.4, a 768-token input cap, greedy mean
conditional log-probability scoring, and no sampling.

\subsection{Robustness protocol}

On a fixed 108-task subset, we reevaluate selected memories using physical
deletion, UNKNOWN masks, a schema-compatible PENDING value, and counterfactual
values.  We compare semantic YES/NO decisions with direct-action and
counterbalanced A/B interfaces; the primary condition uses UNKNOWN masks and
semantic YES/NO.  The subset contains every fourth model-stage task and covers
every family/domain/support cell, with seed block 0 and horizon label 1.

If the direct-action and A/B interfaces reverse the method ordering, the
registered fallback restores the full-memory archive and records zero payload
saving.

\Needspace{8\baselineskip}
\subsection{Development controls and statistical analysis}

Interface choices were made with separate development models.  An audit found
and corrected an iterator that overrepresented the first two interaction
families before the reported experiments; final development uses eight tasks
per family.  These corrections preceded the reported experiments.

We report each model separately and use the unweighted mean across the four
displayed models only as a descriptive summary.  The registered statistical
family contains twelve tests: behavioral retention, active-payload savings,
and model-query count for each model.  Binary retention uses exact McNemar
tests, while continuous outcomes use paired sign-flip tests.  Query-count
intervals use 2,000 task-cluster bootstrap replicates, and the associated
$p$ values are Holm-adjusted across the twelve-test family
\citep{holm1979simple}.  Appendix Table~\ref{tab:main-query-statistics} reports
these query-count tests.  Exact-sufficiency and minimum-cost comparisons are
descriptive.

\section{Results}

We begin with the exact evaluator, then examine the same search with model
judgments.  We next separate results by interaction family and test whether
they persist across memory representations, decision labels, and models.

\subsection{Exact-stage recovery and subset-evaluation cost}

Table~\ref{tab:exact} compares recovery, retained cost, and selection cost when
every candidate subset is judged by the exact oracle.

\begin{table}[!htbp]
\centering
\small
\setlength{\tabcolsep}{5pt}
\caption{Exact-stage aggregate over \ExactTasks{} conditions.  Exact
sufficiency and minimum-cost recovery are proportions.  Savings are relative
to full registered payload cost.  Higher is better except for subset
evaluations, which exclude the shared full-memory reference.}
\label{tab:exact}
\begin{tabular}{lrrrr}
\toprule
Method & Exact suff. & Min.-cost & Savings & Subset evals. \\
\midrule
Random-$B$ & 0.162 & 0.001 & 0.766 & 0.00 \\
Semantic top-$B$ & 0.944 & 0.222 & 0.775 & 0.00 \\
Independent leave-one-out & 0.667 & 0.667 & 0.900 & 30.00 \\
Cost-aware singleton & 1.000 & 1.000 & 0.878 & 30.00 \\
Legacy JointCore & 1.000 & 0.933 & 0.876 & 10.96 \\
Cost-ordered JointCore & 1.000 & 1.000 & 0.878 & 11.13 \\
Exact oracle & 1.000 & 1.000 & 0.878 & 0.00 \\
\bottomrule
\end{tabular}
\end{table}

The zero-query retrieval baselines expose a different tradeoff.  Semantic
top-$B$ is sufficient in 94.4\% of conditions, but recovers a minimum-cost set
in only 22.2\%; random-$B$ is sufficient in 16.2\%.  Cost-ordered JointCore
matches the exact oracle's sufficiency and selected cost on all \ExactTasks{}
conditions.  Against cost-aware singleton deletion, it
uses \ExactJCInterventions{} rather than \ExactSingletonInterventions{} subset
evaluations, a \ExactInterventionReduction{} reduction.  Legacy ordering
requires 10.96 evaluations but misses a minimum-cost set in 6.7\% of
conditions.  The difference in selection cost grows with memory size.  The
means are 10.12 versus 16.00 at $n=16$ and 14.14 versus 64.00 at $n=64$; at
$n=8$, they are 8.11 versus 8.00.

\FloatBarrier

\subsection{Model-mediated query cost and recovery}

Table~\ref{tab:model} reports all primary-interface metrics by model.
Figure~\ref{fig:model-effects} then isolates the two main comparisons: model
queries and minimum-cost recovery.

\begin{table}[!htbp]
\centering
\small
\setlength{\tabcolsep}{4pt}
\caption{Primary-interface evaluation over \HeadlineTasksPerModel{} tasks per
model.  Control is exact full-memory accuracy.  Remaining cells are
cost-ordered JointCore / cost-aware singleton: behavior is agreement with the
model's full-memory decision, exact is logical sufficiency, min.-cost is exact
minimum-cost recovery, and savings use registered payload cost.  Higher is
better except for queries.  $\dagger$Exploratory model result.}
\label{tab:model}
\begin{tabular}{lrrrrrr}
\toprule
Checkpoint & Control & Behavior & Exact & Min.-cost & Savings & Queries \\
\midrule
\input{generated/model_rows.tex}\\
\bottomrule
\end{tabular}
\end{table}

Both conditional methods retain full-memory behavior on every primary task.
JointCore uses fewer model queries for each model; the unweighted mean is
\HeadlineJCInterventions{} rather than \HeadlineSingletonInterventions{}
(\HeadlineReduction{} fewer).  All four paired query reductions have
Holm-adjusted $p<.001$ under the registered twelve-test family
(Appendix Table~\ref{tab:main-query-statistics}).  Mean minimum-cost recovery is
\HeadlineJCMinimum{} for JointCore and \HeadlineSingletonMinimum{} for
cost-aware singleton deletion.  Minimum-cost and exact-sufficiency comparisons
are descriptive.

\begin{figure}[!htbp]
\centering
\includegraphics[width=\linewidth]{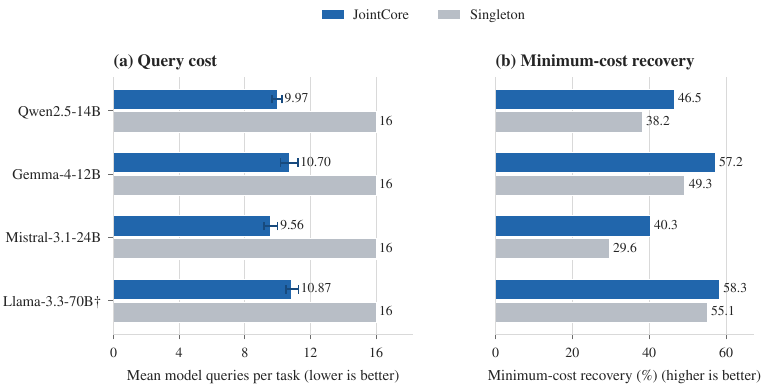}
\caption{Primary-interface comparison over \HeadlineTasksPerModel{} tasks per
model.  (a) Mean selection queries per task, where lower is better; JointCore
whiskers are 95\% task-bootstrap intervals and singleton always uses 16
queries.  (b) Exact minimum-cost recovery, where higher is better.
$\dagger$Exploratory model result.}
\label{fig:model-effects}
\end{figure}

Exact sufficiency is higher for JointCore on three rows and lower for Llama
(\LlamaJCExact{} versus \LlamaSingletonExact{}).  The model results therefore
do not show a uniform increase in logical sufficiency.

\FloatBarrier

\subsection{Results by interaction family}

Figure~\ref{fig:families} separates the model-stage differences by interaction
family.

\begin{figure}[!htbp]
\centering
\includegraphics[width=\linewidth]{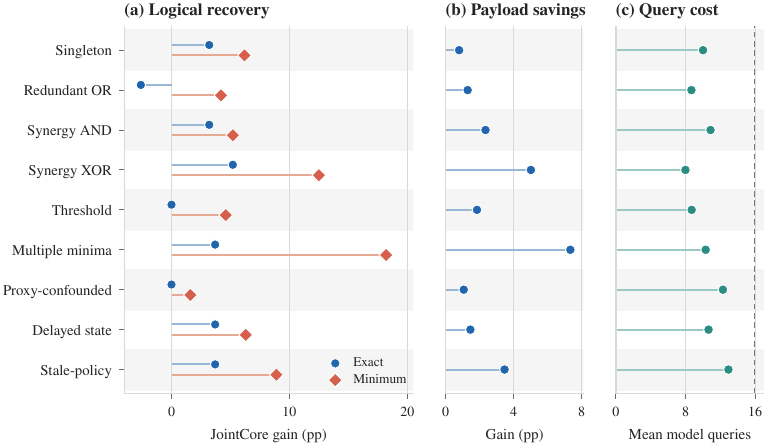}
\caption{Model-stage results by interaction family.  Panels (a) and (b) show
cost-ordered JointCore minus cost-aware singleton in percentage points for
logical recovery and active-payload savings; positive values favor JointCore.
Panel (c) shows JointCore's absolute mean query count, with singleton's
16-query cost as the reference.  Values are unweighted means across the four
displayed models and include the exploratory Llama results.}
\label{fig:families}
\end{figure}

Minimum-cost recovery differs most on multiple-minimum (+18.2 points) and XOR
(+12.5 points) tasks; these families also have the largest payload-savings
differences (+7.33 and +5.01 points).  On redundant-OR tasks, minimum-cost
recovery is 91.7\% versus 87.5\%, while exact sufficiency is 95.8\% versus
98.4\%.  The methods have equal exact sufficiency on proxy-confounded tasks
(54.2\%).  JointCore's mean query count is below the 16-query singleton
baseline in every family.

\FloatBarrier

\subsection{Memory-interface and label robustness}

Figure~\ref{fig:robustness} reports task success under each memory and label
interface in the 108-task robustness subset.

\begin{figure}[!htbp]
\centering
\includegraphics[width=\linewidth]{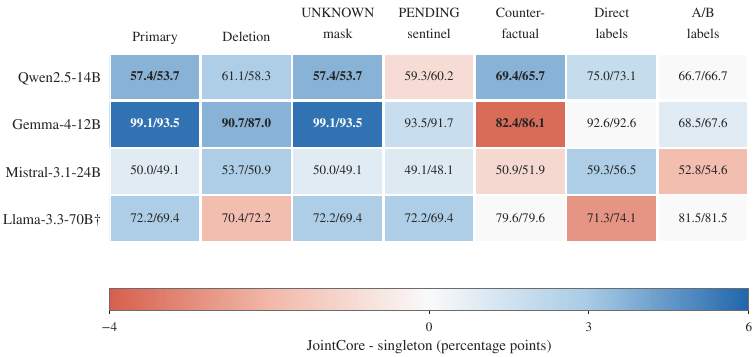}
\caption{Memory and label robustness on the fixed 108-task subset for each
model.  Each cell is cost-ordered JointCore / cost-aware singleton task success
(\%); color is their percentage-point difference.  Primary uses semantic
YES/NO decisions and UNKNOWN masks.  $\dagger$Exploratory model result.}
\label{fig:robustness}
\end{figure}

The ordering is interface-dependent.  Counterfactual values favor singleton
for Gemma (82.4\%/86.1\%), while physical deletion favors singleton for Llama
(70.4\%/72.2\%).  For Mistral, direct actions favor JointCore
(59.3\%/56.5\%), whereas A/B labels favor singleton (52.8\%/54.6\%).  For
Llama, direct actions favor singleton (71.3\%/74.1\%) and A/B labels are tied
(81.5\%/81.5\%).  Under the registered archive fallback,
\ArchiveFallbackActors{} revert to full memory, with behavior retention 1.0
and payload saving 0.0.

Together, these stress tests do not establish interface-invariant compression.

\FloatBarrier

\section{Limitations and broader impact}

\paragraph{Controlled scope.}
Exact oracles require generated workflows.  The six domain renderings share one
grammar, and the horizon labels do not represent long autonomous rollouts, so
the benchmark cannot estimate interaction prevalence in natural agent histories.
Its fixed 108-task stress subset covers every family/domain/support cell but
only one seed/horizon block.  Behavioral sufficiency can preserve a model error
and does not establish truth or causal importance.

\paragraph{Model and interface scope.}
The 12B--70B display cohort uses a non-preregistered cutoff and cannot support
a scaling claim; Llama remains exploratory.  The models also differ in
precision: Qwen and Gemma use BF16 weights, whereas Mistral and Llama use
distinct NF4 loading paths.  Finally, deletion, UNKNOWN masks, PENDING,
counterfactual values, and decision labels are not interchangeable.  The
observed reversals limit generalization beyond the tested interfaces.

\paragraph{Deployment implications.}
Registered payload weights are not token counts or latency measurements.
Although smaller active memories may reduce inference cost and disclosure,
removing authorization, consent, or provenance records can enable harmful
actions.  A deployed selector would require a trusted archive, access controls,
human review, and domain-specific conservative rules; its model queries may
themselves expose sensitive records.

\section{Conclusion}

JointCoreBench evaluates per-decision memory sufficiency under controlled
redundancy and synergy.  In exact evaluation, cost-ordered JointCore recovers a
minimum-cost sufficient set on all \ExactTasks{} conditions with
\ExactJCInterventions{} subset evaluations, compared with
\ExactSingletonInterventions{} for cost-aware singleton deletion.  Across four
models, mean query count decreases from \HeadlineSingletonInterventions{} to
\HeadlineJCInterventions{}, and mean minimum-cost recovery increases from
\HeadlineSingletonMinimum{} to \HeadlineJCMinimum{}.  Full-memory behavior is
retained under the primary interface, but memory and label changes show
interface dependence.  Evaluation on natural histories and longer trajectories
is required to assess prevalence, interface robustness, and deployment risk.

\section*{Acknowledgment}
The authors acknowledge Mahindra University for access to the DGX supercomputing facility and the technical and financial support from the Mahindra University--Virginia Tech Interdisciplinary Advanced Research Centre for Transformative Technologies (MU-VT IARCT$^{2}$), Project No. MU-VT IARCT$^{2}$/TA-AIML/2025/01.

\label{main-content-end}

\bibliographystyle{plainnat}
\bibliography{references}

\clearpage
\appendix
\section{Benchmark definitions and worked example}
\label{app:benchmark-details}

\subsection{Interaction families}

Table~\ref{tab:family-definitions} gives the exact retained-set condition for
each benchmark family.  Let $R$ denote the generated support records,
$R_A,R_B$ the two equal authorization channels, and
$k=\lceil |R|/2\rceil$.  Records outside these sets are distractors.  A listed
solution is minimum only when it also has the lowest registered payload cost.

\begin{table}[!htbp]
\centering
\footnotesize
\setlength{\tabcolsep}{4pt}
\caption{Exact interaction families used by JointCoreBench.  The condition
column specifies when a retained set $S$ is logically sufficient.}
\label{tab:family-definitions}
\begin{tabular}{>{\raggedright\arraybackslash}p{2.45cm}
                >{\raggedright\arraybackslash}p{7.0cm}
                >{\raggedright\arraybackslash}p{3.0cm}}
\toprule
Family & Sufficiency condition & Candidate minimal sets \\
\midrule
Singleton & The designated record $r_1$ is retained. & $\{r_1\}$ \\
Redundant OR & At least one record in $R$ is retained. & Any $\{r_i\}$ \\
Synergistic AND & Every record in $R$ is retained. & $R$ \\
XOR & Every XOR flag in $R$ is retained; their fixed values contain exactly one ON flag. & $R$ \\
Threshold & At least $k$ records from $R$ are retained. & Any size-$k$ subset of $R$ \\
Multiple minimum sets & Every record in $R_A$ or every record in $R_B$ is retained. & $R_A$ or $R_B$ \\
Proxy-confounded & Every record in $R$ is retained; a higher-scored proxy is not required. & $R$ \\
Delayed-state & Every state-transition record in $R$ is retained. & $R$ \\
Stale-policy conjunction & Current authorization, required support, and the stale-policy record are retained. & $R$ \\
\bottomrule
\end{tabular}
\end{table}

\subsection{Worked authorization example}

Consider the stored 16-record order task with two interchangeable authorization
channels.  Record $m_0$ validates channel A and $m_{10}$ validates channel B;
either record alone preserves the full-memory action.  The other fourteen
records are distractors.  Full memory costs 81, while either minimum sufficient
set costs 3.  Independent leave-one-out removes both authorization records
because each appears unnecessary while the other remains, and therefore returns
an insufficient empty set.

JointCore tests deletion groups of sizes 16, 8, 8, 4, 4, 2, 2, 1, and 1.
Decision-preserving tests remove the groups at evaluations 2, 4, 6, and 8;
decision-changing tests split the group or retain the final singleton.  The
procedure returns $\{m_{10}\}$ after nine subset evaluations.  Conditional
singleton deletion reaches the same minimum set after sixteen evaluations.
Table~\ref{tab:worked-example} summarizes the complete comparison.

\begin{table}[!htbp]
\centering
\small
\setlength{\tabcolsep}{5pt}
\caption{Exact worked example from the frozen benchmark.  Static retrieval
uses no selection evaluations; the exact oracle is a reference, not a selector.}
\label{tab:worked-example}
\begin{tabular}{lrrrr}
\toprule
Method & Retained memory & Cost & Subset evals. & Exact / minimum \\
\midrule
Full memory & all 16 records & 81 & 0 & yes / no \\
Semantic top-$4$ & $\{m_0,m_9,m_{10},m_{11}\}$ & 20 & 0 & yes / no \\
Independent leave-one-out & $\emptyset$ & 0 & 16 & no / no \\
Cost-aware singleton & $\{m_{10}\}$ & 3 & 16 & yes / yes \\
Cost-ordered JointCore & $\{m_{10}\}$ & 3 & 9 & yes / yes \\
Exact oracle & $\{m_0\}$ & 3 & 0 & yes / yes \\
\bottomrule
\end{tabular}
\end{table}

\clearpage
\section{Supplementary query statistics}
\label{app:main-query-statistics}

Table~\ref{tab:main-query-statistics} reports the paired query-count analysis
for the four models displayed in the main paper.

\begin{table}[ht]
\centering
\small
\setlength{\tabcolsep}{6pt}
\caption{Displayed-cohort paired model-query comparisons.  Differences are
JointCore minus cost-aware singleton, so negative values favor JointCore.
Adjusted $p$ uses the registered twelve-test Holm family.}
\label{tab:main-query-statistics}
\begin{tabular}{lrrr}
\toprule
Checkpoint & $\Delta$ queries & 95\% CI & adjusted $p$ \\
\midrule
\input{generated/statistics_rows.tex}\\
\bottomrule
\end{tabular}
\end{table}

\clearpage
\input{checklist.tex}

\end{document}

%% file: generated/result_macros.tex
\newcommand{\ExactTasks}{6,912}
\newcommand{\ExactJCValid}{100.0\%}
\newcommand{\ExactJCSavings}{87.8\%}
\newcommand{\ExactJCInterventions}{11.13}
\newcommand{\ExactSingletonInterventions}{30.00}
\newcommand{\ExactInterventionReduction}{62.9\%}

\newcommand{\HeadlineTasksPerModel}{432}

\newcommand{\HeadlineJCMinimum}{50.6\%}
\newcommand{\HeadlineSingletonMinimum}{43.1\%}

\newcommand{\HeadlineJCInterventions}{10.28}
\newcommand{\HeadlineSingletonInterventions}{16.00}
\newcommand{\HeadlineReduction}{35.8\%}

\newcommand{\LlamaJCExact}{70.6\%}
\newcommand{\LlamaSingletonExact}{71.3\%}

\newcommand{\ArchiveFallbackActors}{Mistral--3.1--24B NF4, Llama--3.3--70B NF4}

%% file: generated/model_rows.tex
Qwen2.5--14B & 0.988 & 1.000/1.000 & 0.586/0.544 & 0.465/0.382 & 0.881/0.860 & 9.97/16.00 \\
Gemma--4--12B & 0.875 & 1.000/1.000 & 0.845/0.808 & 0.572/0.493 & 0.837/0.807 & 10.70/16.00 \\
Mistral--3.1--24B NF4 & 0.847 & 1.000/1.000 & 0.660/0.644 & 0.403/0.296 & 0.874/0.835 & 9.56/16.00 \\
Llama--3.3--70B NF4$^{\dagger}$ & 0.993 & 1.000/1.000 & 0.706/0.713 & 0.583/0.551 & 0.856/0.836 & 10.87/16.00

%% file: generated/statistics_rows.tex
Qwen2.5--14B & -6.03 & [-6.34, -5.73] & $<.001$ \\
Gemma--4--12B & -5.30 & [-5.81, -4.77] & $<.001$ \\
Mistral--3.1--24B NF4 & -6.44 & [-6.84, -6.00] & $<.001$ \\
Llama--3.3--70B NF4$^{\dagger}$ & -5.13 & [-5.49, -4.74] & $<.001$

%% file: checklist.tex
\section*{NeurIPS Paper Checklist}

\begin{enumerate}

\item {\bf Claims}
    \item[] Question: Do the main claims made in the abstract and introduction accurately reflect the paper's contributions and scope?
    \item[] Answer: \answerYes{}
    \item[] Justification: The abstract and Introduction state the benchmark-conditioned claims, contributions, and scope.  Sections~5--6 report the supporting results and their limits.
    \item[] Guidelines:
    \begin{itemize}
        \item The answer \answerNA{} means that the abstract and introduction do not include the claims made in the paper.
        \item The abstract and/or introduction should clearly state the claims made, including the contributions made in the paper and important assumptions and limitations. A \answerNo{} or \answerNA{} answer to this question will not be perceived well by the reviewers. 
        \item The claims made should match theoretical and experimental results, and reflect how much the results can be expected to generalize to other settings. 
        \item It is fine to include aspirational goals as motivation as long as it is clear that these goals are not attained by the paper. 
    \end{itemize}

\item {\bf Limitations}
    \item[] Question: Does the paper discuss the limitations of the work performed by the authors?
    \item[] Answer: \answerYes{}
    \item[] Justification: Section~6 discusses the controlled benchmark scope, model and interface dependence, quantization differences, interface limitations, and deployment risks.
    \item[] Guidelines:
    \begin{itemize}
        \item The answer \answerNA{} means that the paper has no limitation while the answer \answerNo{} means that the paper has limitations, but those are not discussed in the paper. 
        \item The authors are encouraged to create a separate ``Limitations'' section in their paper.
        \item The paper should point out any strong assumptions and how robust the results are to violations of these assumptions (e.g., independence assumptions, noiseless settings, model well-specification, asymptotic approximations only holding locally). The authors should reflect on how these assumptions might be violated in practice and what the implications would be.
        \item The authors should reflect on the scope of the claims made, e.g., if the approach was only tested on a few datasets or with a few runs. In general, empirical results often depend on implicit assumptions, which should be articulated.
        \item The authors should reflect on the factors that influence the performance of the approach. For example, a facial recognition algorithm may perform poorly when image resolution is low or images are taken in low lighting. Or a speech-to-text system might not be used reliably to provide closed captions for online lectures because it fails to handle technical jargon.
        \item The authors should discuss the computational efficiency of the proposed algorithms and how they scale with dataset size.
        \item If applicable, the authors should discuss possible limitations of their approach to address problems of privacy and fairness.
        \item While the authors might fear that complete honesty about limitations might be used by reviewers as grounds for rejection, a worse outcome might be that reviewers discover limitations that aren't acknowledged in the paper. The authors should use their best judgment and recognize that individual actions in favor of transparency play an important role in developing norms that preserve the integrity of the community. Reviewers will be specifically instructed to not penalize honesty concerning limitations.
    \end{itemize}

\item {\bf Theory assumptions and proofs}
    \item[] Question: For each theoretical result, does the paper provide the full set of assumptions and a complete (and correct) proof?
    \item[] Answer: \answerYes{}
    \item[] Justification: Proposition~1 states its finite construction and assumptions, and its complete proof follows immediately after the statement in Section~2.
    \item[] Guidelines:
    \begin{itemize}
        \item The answer \answerNA{} means that the paper does not include theoretical results. 
        \item All the theorems, formulas, and proofs in the paper should be numbered and cross-referenced.
        \item All assumptions should be clearly stated or referenced in the statement of any theorems.
        \item The proofs can either appear in the main paper or the supplemental material, but if they appear in the supplemental material, the authors are encouraged to provide a short proof sketch to provide intuition. 
        \item Inversely, any informal proof provided in the core of the paper should be complemented by formal proofs provided in appendix or supplemental material.
        \item Theorems and Lemmas that the proof relies upon should be properly referenced. 
    \end{itemize}

    \item {\bf Experimental result reproducibility}
    \item[] Question: Does the paper fully disclose all the information needed to reproduce the main experimental results of the paper to the extent that it affects the main claims and/or conclusions of the paper (regardless of whether the code and data are provided or not)?
    \item[] Answer: \answerYes{}
    \item[] Justification: Sections~3--4 specify the search procedure, comparators, deterministic task grids, model revisions, memory representations, scoring rule, runtime configuration, and statistical analysis.
    \item[] Guidelines:
    \begin{itemize}
        \item The answer \answerNA{} means that the paper does not include experiments.
        \item If the paper includes experiments, a \answerNo{} answer to this question will not be perceived well by the reviewers: Making the paper reproducible is important, regardless of whether the code and data are provided or not.
        \item If the contribution is a dataset and\slash or model, the authors should describe the steps taken to make their results reproducible or verifiable. 
        \item Depending on the contribution, reproducibility can be accomplished in various ways. For example, if the contribution is a novel architecture, describing the architecture fully might suffice, or if the contribution is a specific model and empirical evaluation, it may be necessary to either make it possible for others to replicate the model with the same dataset, or provide access to the model. In general. releasing code and data is often one good way to accomplish this, but reproducibility can also be provided via detailed instructions for how to replicate the results, access to a hosted model (e.g., in the case of a large language model), releasing of a model checkpoint, or other means that are appropriate to the research performed.
        \item While NeurIPS does not require releasing code, the conference does require all submissions to provide some reasonable avenue for reproducibility, which may depend on the nature of the contribution. For example
        \begin{enumerate}
            \item If the contribution is primarily a new algorithm, the paper should make it clear how to reproduce that algorithm.
            \item If the contribution is primarily a new model architecture, the paper should describe the architecture clearly and fully.
            \item If the contribution is a new model (e.g., a large language model), then there should either be a way to access this model for reproducing the results or a way to reproduce the model (e.g., with an open-source dataset or instructions for how to construct the dataset).
            \item We recognize that reproducibility may be tricky in some cases, in which case authors are welcome to describe the particular way they provide for reproducibility. In the case of closed-source models, it may be that access to the model is limited in some way (e.g., to registered users), but it should be possible for other researchers to have some path to reproducing or verifying the results.
        \end{enumerate}
    \end{itemize}

\item {\bf Open access to data and code}
    \item[] Question: Does the paper provide open access to the data and code, with sufficient instructions to faithfully reproduce the main experimental results, as described in supplemental material?
    \item[] Answer: \answerYes{}
    \item[] Justification: Code, deterministic benchmark generation, frozen experiment configurations, tests, and reproduction commands are available at \url{https://github.com/dr-pandit-69/jointcore-neurips}.  Model weights remain available from the cited upstream publishers.
    \item[] Guidelines:
    \begin{itemize}
        \item The answer \answerNA{} means that paper does not include experiments requiring code.
        \item Please see the NeurIPS code and data submission guidelines (\url{https://neurips.cc/public/guides/CodeSubmissionPolicy}) for more details.
        \item While we encourage the release of code and data, we understand that this might not be possible, so \answerNo{} is an acceptable answer. Papers cannot be rejected simply for not including code, unless this is central to the contribution (e.g., for a new open-source benchmark).
        \item The instructions should contain the exact command and environment needed to run to reproduce the results. See the NeurIPS code and data submission guidelines (\url{https://neurips.cc/public/guides/CodeSubmissionPolicy}) for more details.
        \item The authors should provide instructions on data access and preparation, including how to access the raw data, preprocessed data, intermediate data, and generated data, etc.
        \item The authors should provide scripts to reproduce all experimental results for the new proposed method and baselines. If only a subset of experiments are reproducible, they should state which ones are omitted from the script and why.
        \item At submission time, to preserve anonymity, the authors should release anonymized versions (if applicable).
        \item Providing as much information as possible in supplemental material (appended to the paper) is recommended, but including URLs to data and code is permitted.
    \end{itemize}

\item {\bf Experimental setting/details}
    \item[] Question: Does the paper specify all the training and test details (e.g., data splits, hyperparameters, how they were chosen, type of optimizer) necessary to understand the results?
    \item[] Answer: \answerYes{}
    \item[] Justification: Section~4 specifies the exact and model-mediated grids, development history, model cohort, precision, software and hardware, input cap, decoding, scoring, and statistical procedure.
    \item[] Guidelines:
    \begin{itemize}
        \item The answer \answerNA{} means that the paper does not include experiments.
        \item The experimental setting should be presented in the core of the paper to a level of detail that is necessary to appreciate the results and make sense of them.
        \item The full details can be provided either with the code, in appendix, or as supplemental material.
    \end{itemize}

\item {\bf Experiment statistical significance}
    \item[] Question: Does the paper report error bars suitably and correctly defined or other appropriate information about the statistical significance of the experiments?
    \item[] Answer: \answerYes{}
    \item[] Justification: Section~4 defines the registered twelve-test family, task-cluster bootstrap, paired tests, and Holm correction.  Figure~2 and Appendix Table~\ref{tab:main-query-statistics} report the model-query intervals and adjusted tests; exact-sufficiency and minimum-cost comparisons are explicitly descriptive.
    \item[] Guidelines:
    \begin{itemize}
        \item The answer \answerNA{} means that the paper does not include experiments.
        \item The authors should answer \answerYes{} if the results are accompanied by error bars, confidence intervals, or statistical significance tests, at least for the experiments that support the main claims of the paper.
        \item The factors of variability that the error bars are capturing should be clearly stated (for example, train/test split, initialization, random drawing of some parameter, or overall run with given experimental conditions).
        \item The method for calculating the error bars should be explained (closed form formula, call to a library function, bootstrap, etc.)
        \item The assumptions made should be given (e.g., Normally distributed errors).
        \item It should be clear whether the error bar is the standard deviation or the standard error of the mean.
        \item It is OK to report 1-sigma error bars, but one should state it. The authors should preferably report a 2-sigma error bar than state that they have a 96\% CI, if the hypothesis of Normality of errors is not verified.
        \item For asymmetric distributions, the authors should be careful not to show in tables or figures symmetric error bars that would yield results that are out of range (e.g., negative error rates).
        \item If error bars are reported in tables or plots, the authors should explain in the text how they were calculated and reference the corresponding figures or tables in the text.
    \end{itemize}

\item {\bf Experiments compute resources}
    \item[] Question: For each experiment, does the paper provide sufficient information on the computer resources (type of compute workers, memory, time of execution) needed to reproduce the experiments?
    \item[] Answer: \answerNo{}
    \item[] Justification: Section~4 reports the 40GB A100 hardware, precision, software stack, and inference settings, but the current submission does not provide per-run and total execution times or storage requirements.
    \item[] Guidelines:
    \begin{itemize}
        \item The answer \answerNA{} means that the paper does not include experiments.
        \item The paper should indicate the type of compute workers CPU or GPU, internal cluster, or cloud provider, including relevant memory and storage.
        \item The paper should provide the amount of compute required for each of the individual experimental runs as well as estimate the total compute. 
        \item The paper should disclose whether the full research project required more compute than the experiments reported in the paper (e.g., preliminary or failed experiments that didn't make it into the paper). 
    \end{itemize}
    
\item {\bf Code of ethics}
    \item[] Question: Does the research conducted in the paper conform, in every respect, with the NeurIPS Code of Ethics \url{https://neurips.cc/public/EthicsGuidelines}?
    \item[] Answer: \answerYes{}
    \item[] Justification: The study uses synthetic records and local inference, and involves no human-subject data or external deployment.
    \item[] Guidelines:
    \begin{itemize}
        \item The answer \answerNA{} means that the authors have not reviewed the NeurIPS Code of Ethics.
        \item If the authors answer \answerNo, they should explain the special circumstances that require a deviation from the Code of Ethics.
        \item The authors should make sure to preserve anonymity (e.g., if there is a special consideration due to laws or regulations in their jurisdiction).
    \end{itemize}

\item {\bf Broader impacts}
    \item[] Question: Does the paper discuss both potential positive societal impacts and negative societal impacts of the work performed?
    \item[] Answer: \answerYes{}
    \item[] Justification: Section~6 discusses possible efficiency and disclosure benefits together with privacy exposure and the risks of removing authorization, consent, provenance, or other safety-critical records.
    \item[] Guidelines:
    \begin{itemize}
        \item The answer \answerNA{} means that there is no societal impact of the work performed.
        \item If the authors answer \answerNA{} or \answerNo, they should explain why their work has no societal impact or why the paper does not address societal impact.
        \item Examples of negative societal impacts include potential malicious or unintended uses (e.g., disinformation, generating fake profiles, surveillance), fairness considerations (e.g., deployment of technologies that could make decisions that unfairly impact specific groups), privacy considerations, and security considerations.
        \item The conference expects that many papers will be foundational research and not tied to particular applications, let alone deployments. However, if there is a direct path to any negative applications, the authors should point it out. For example, it is legitimate to point out that an improvement in the quality of generative models could be used to generate Deepfakes for disinformation. On the other hand, it is not needed to point out that a generic algorithm for optimizing neural networks could enable people to train models that generate Deepfakes faster.
        \item The authors should consider possible harms that could arise when the technology is being used as intended and functioning correctly, harms that could arise when the technology is being used as intended but gives incorrect results, and harms following from (intentional or unintentional) misuse of the technology.
        \item If there are negative societal impacts, the authors could also discuss possible mitigation strategies (e.g., gated release of models, providing defenses in addition to attacks, mechanisms for monitoring misuse, mechanisms to monitor how a system learns from feedback over time, improving the efficiency and accessibility of ML).
    \end{itemize}
    
\item {\bf Safeguards}
    \item[] Question: Does the paper describe safeguards that have been put in place for responsible release of data or models that have a high risk for misuse (e.g., pre-trained language models, image generators, or scraped datasets)?
    \item[] Answer: \answerNA{}
    \item[] Justification: The work releases neither a trained model nor scraped or sensitive data; the benchmark consists of synthetic workflows and evaluation code.
    \item[] Guidelines:
    \begin{itemize}
        \item The answer \answerNA{} means that the paper poses no such risks.
        \item Released models that have a high risk for misuse or dual-use should be released with necessary safeguards to allow for controlled use of the model, for example by requiring that users adhere to usage guidelines or restrictions to access the model or implementing safety filters. 
        \item Datasets that have been scraped from the Internet could pose safety risks. The authors should describe how they avoided releasing unsafe images.
        \item We recognize that providing effective safeguards is challenging, and many papers do not require this, but we encourage authors to take this into account and make a best faith effort.
    \end{itemize}

\item {\bf Licenses for existing assets}
    \item[] Question: Are the creators or original owners of assets (e.g., code, data, models), used in the paper, properly credited and are the license and terms of use explicitly mentioned and properly respected?
    \item[] Answer: \answerNo{}
    \item[] Justification: Section~4 cites the model sources and identifies the evaluated revisions, but the current submission does not yet include a complete inventory of license names and terms for every model and quantized derivative.
    \item[] Guidelines:
    \begin{itemize}
        \item The answer \answerNA{} means that the paper does not use existing assets.
        \item The authors should cite the original paper that produced the code package or dataset.
        \item The authors should state which version of the asset is used and, if possible, include a URL.
        \item The name of the license (e.g., CC-BY 4.0) should be included for each asset.
        \item For scraped data from a particular source (e.g., website), the copyright and terms of service of that source should be provided.
        \item If assets are released, the license, copyright information, and terms of use in the package should be provided. For popular datasets, \url{paperswithcode.com/datasets} has curated licenses for some datasets. Their licensing guide can help determine the license of a dataset.
        \item For existing datasets that are re-packaged, both the original license and the license of the derived asset (if it has changed) should be provided.
        \item If this information is not available online, the authors are encouraged to reach out to the asset's creators.
    \end{itemize}

\item {\bf New assets}
    \item[] Question: Are new assets introduced in the paper well documented and is the documentation provided alongside the assets?
    \item[] Answer: \answerYes{}
    \item[] Justification: The linked repository provides the benchmark implementation, deterministic generation procedure, frozen configurations, tests, benchmark documentation, and reproduction commands.
    \item[] Guidelines:
    \begin{itemize}
        \item The answer \answerNA{} means that the paper does not release new assets.
        \item Researchers should communicate the details of the dataset\slash code\slash model as part of their submissions via structured templates. This includes details about training, license, limitations, etc. 
        \item The paper should discuss whether and how consent was obtained from people whose asset is used.
        \item At submission time, remember to anonymize your assets (if applicable). You can either create an anonymized URL or include an anonymized zip file.
    \end{itemize}

\item {\bf Crowdsourcing and research with human subjects}
    \item[] Question: For crowdsourcing experiments and research with human subjects, does the paper include the full text of instructions given to participants and screenshots, if applicable, as well as details about compensation (if any)? 
    \item[] Answer: \answerNA{}
    \item[] Justification: The study is entirely synthetic and does not involve crowdsourcing, human participants, or human-subject data.
    \item[] Guidelines:
    \begin{itemize}
        \item The answer \answerNA{} means that the paper does not involve crowdsourcing nor research with human subjects.
        \item Including this information in the supplemental material is fine, but if the main contribution of the paper involves human subjects, then as much detail as possible should be included in the main paper. 
        \item According to the NeurIPS Code of Ethics, workers involved in data collection, curation, or other labor should be paid at least the minimum wage in the country of the data collector. 
    \end{itemize}

\item {\bf Institutional review board (IRB) approvals or equivalent for research with human subjects}
    \item[] Question: Does the paper describe potential risks incurred by study participants, whether such risks were disclosed to the subjects, and whether Institutional Review Board (IRB) approvals (or an equivalent approval/review based on the requirements of your country or institution) were obtained?
    \item[] Answer: \answerNA{}
    \item[] Justification: No human participants, crowdsourcing, or human-subject data are involved, so IRB or equivalent review is not applicable.
    \item[] Guidelines:
    \begin{itemize}
        \item The answer \answerNA{} means that the paper does not involve crowdsourcing nor research with human subjects.
        \item Depending on the country in which research is conducted, IRB approval (or equivalent) may be required for any human subjects research. If you obtained IRB approval, you should clearly state this in the paper. 
        \item We recognize that the procedures for this may vary significantly between institutions and locations, and we expect authors to adhere to the NeurIPS Code of Ethics and the guidelines for their institution. 
        \item For initial submissions, do not include any information that would break anonymity (if applicable), such as the institution conducting the review.
    \end{itemize}

\item {\bf Declaration of LLM usage}
    \item[] Question: Does the paper describe the usage of LLMs if it is an important, original, or non-standard component of the core methods in this research? Note that if the LLM is used only for writing, editing, or formatting purposes and does \emph{not} impact the core methodology, scientific rigor, or originality of the research, declaration is not required.
    %this research? 
    \item[] Answer: \answerYes{}
    \item[] Justification: Sections~4--6 describe the LLM checkpoints as a core experimental component.  This submission is human-authored; AI tools were used only in an assistive capacity, and the authors reviewed and take responsibility for the final manuscript.
    \item[] Guidelines:
    \begin{itemize}
        \item The answer \answerNA{} means that the core method development in this research does not involve LLMs as any important, original, or non-standard components.
        \item Please refer to our LLM policy in the NeurIPS handbook for what should or should not be described.
    \end{itemize}

\end{enumerate}

%% file: main.bbl
\begin{thebibliography}{15}
\providecommand{\natexlab}[1]{#1}
\providecommand{\url}[1]{\texttt{#1}}
\expandafter\ifx\csname urlstyle\endcsname\relax
  \providecommand{\doi}[1]{doi: #1}\else
  \providecommand{\doi}{doi: \begingroup \urlstyle{rm}\Url}\fi

\bibitem[Carter et~al.(2019)Carter, Mueller, Jain, and Gifford]{carter2019sis}
Brandon Carter, Jonas Mueller, Siddhartha Jain, and David Gifford.
\newblock What made you do this? understanding black-box decisions with
  sufficient input subsets.
\newblock In \emph{Proceedings of AISTATS}, pages 567--576, 2019.

\bibitem[Covert et~al.(2021)Covert, Lundberg, and Lee]{covert2021removing}
Ian Covert, Scott~M. Lundberg, and Su-In Lee.
\newblock Explaining by removing: A unified framework for model explanation.
\newblock \emph{Journal of Machine Learning Research}, 22\penalty0
  (209):\penalty0 1--90, 2021.

\bibitem[Dettmers et~al.(2023)Dettmers, Pagnoni, Holtzman, and
  Zettlemoyer]{dettmers2023qlora}
Tim Dettmers, Artidoro Pagnoni, Ari Holtzman, and Luke Zettlemoyer.
\newblock {QLoRA}: Efficient finetuning of quantized {LLM}s.
\newblock In \emph{Advances in Neural Information Processing Systems}, 2023.

\bibitem[{Google DeepMind}(2026)]{google2026gemma4}
{Google DeepMind}.
\newblock Introducing {Gemma 4 12B}.
\newblock
  \url{https://blog.google/innovation-and-ai/technology/developers-tools/introducing-gemma-4-12B/},
  2026.

\bibitem[Holm(1979)]{holm1979simple}
Sture Holm.
\newblock A simple sequentially rejective multiple test procedure.
\newblock \emph{Scandinavian Journal of Statistics}, 6\penalty0 (2):\penalty0
  65--70, 1979.

\bibitem[Hu et~al.(2026)Hu, Wang, and McAuley]{hu2026memoryagentbench}
Yuanzhe Hu, Yu~Wang, and Julian McAuley.
\newblock Evaluating memory in {LLM} agents via incremental multi-turn
  interactions.
\newblock In \emph{International Conference on Learning Representations}, 2026.

\bibitem[{Meta}(2024)]{meta2024llama33}
{Meta}.
\newblock {Llama 3.3 70B Instruct} model card.
\newblock \url{https://huggingface.co/meta-llama/Llama-3.3-70B-Instruct}, 2024.

\bibitem[{Mistral AI}(2025)]{mistral2025small31}
{Mistral AI}.
\newblock Mistral small 3.1.
\newblock \url{https://mistral.ai/news/mistral-small-3-1/}, 2025.

\bibitem[Srivastava(2026)]{srivastava2026cmi}
Saksham~Sahai Srivastava.
\newblock Causal intervention-based memory selection for long-horizon {LLM}
  agents.
\newblock \emph{arXiv preprint arXiv:2605.17641}, 2026.

\bibitem[Tan et~al.(2026)Tan, Yao, Jin, Yu, Wang, Fan, Lu, Liu, Zhang, Ma,
  Yang, and Sun]{tan2026memaudit}
Zhewen Tan, Yilun Yao, Huiyan Jin, Wenhan Yu, Guoan Wang, Mengyuan Fan, Liang
  Lu, Feng Liu, Xiangzheng Zhang, Duohe Ma, Tong Yang, and Lin Sun.
\newblock {MemAudit}: Post-hoc auditing of poisoned agent memory via causal
  attribution and structural anomaly detection.
\newblock \emph{arXiv preprint arXiv:2605.23723}, 2026.

\bibitem[Wu et~al.(2025)Wu, Wang, Yu, Zhang, Chang, and Yu]{wu2024longmemeval}
Di~Wu, Hongwei Wang, Wenhao Yu, Yuwei Zhang, Kai-Wei Chang, and Dong Yu.
\newblock {LongMemEval}: Benchmarking chat assistants on long-term interactive
  memory.
\newblock In \emph{International Conference on Learning Representations}, 2025.

\bibitem[Yang et~al.(2025)Yang, Yang, Zhang, et~al.]{yang2025qwen25}
An~Yang, Baosong Yang, Beichen Zhang, et~al.
\newblock {Qwen2.5} technical report.
\newblock \emph{arXiv preprint arXiv:2412.15115}, 2025.

\bibitem[Zeller and Hildebrandt(2002)]{zeller2002delta}
Andreas Zeller and Ralf Hildebrandt.
\newblock Simplifying and isolating failure-inducing input.
\newblock \emph{IEEE Transactions on Software Engineering}, 28\penalty0
  (2):\penalty0 183--200, 2002.
\newblock \doi{10.1109/32.988498}.

\bibitem[Zhu et~al.(2026)Zhu, Mansur, Sch{\"a}f, Chen, Lepoint, and
  Visser]{zhu2026ddllm}
Hao-Nan Zhu, Muhammad~Numair Mansur, Martin Sch{\"a}f, Zeya Chen, Tancr{\`e}de
  Lepoint, and Willem Visser.
\newblock Delta debugging for {LLM}-integrated systems.
\newblock In \emph{2026 IEEE/ACM 48th International Conference on Software
  Engineering: Software Engineering in Practice (ICSE-SEIP)}, 2026.
\newblock \doi{10.1145/3786583.3786854}.

\bibitem[Zou et~al.(2026)Zou, Guo, Liang, Wang, Wang, Wen, King, Qu, and
  Xu]{zou2026demem}
Mingxi Zou, Zhihan Guo, Langzhang Liang, Zhuo Wang, Qifan Wang, Qingsong Wen,
  Irwin King, Lizhen Qu, and Zenglin Xu.
\newblock Remember the decision, not the description: A rate-distortion
  framework for agent memory.
\newblock \emph{arXiv preprint arXiv:2605.10870}, 2026.

\end{thebibliography}
